\documentclass[letterpaper]{article}
\usepackage{aaai}
\usepackage{times}
\usepackage{helvet}
\usepackage{courier}
\usepackage{url}
\usepackage{amsmath}
\usepackage{amsthm}
\usepackage{subcaption}
\usepackage{graphicx}
\DeclareMathOperator*{\argmax}{arg\,max}
\frenchspacing
\setlength{\pdfpagewidth}{8.5in}
\setlength{\pdfpageheight}{11in}
\pdfinfo{
/Title (Canonical Autocorrelation Analysis)
/Author (Anonymous)}
\setcounter{secnumdepth}{2}  
 \begin{document}
%
\newtheorem{theorem}{Theorem}
\title{Canonical Autocorrelation Analysis}
\author{Maria De-Arteaga, Artur Dubrawski, Peter Huggins\\
Carnegie Mellon University\\
Pittsburgh, PA 15213\\
mdeartea@andrew.cmu.edu, awd@cs.cmu.edu, phuggins@andrew.cmu.edu\\
}
\nocopyright
\maketitle
\begin{abstract}
\begin{quote}
We present an extension of sparse Canonical Correlation Analysis (CCA) designed for finding multiple-to-multiple linear correlations within a single set of variables. Unlike CCA, which finds correlations between two sets of data where the rows are matched exactly but the columns represent separate sets of variables, the method proposed here, Canonical Autocorrelation Analysis (CAA), finds multivariate correlations within just one set of variables. This can be useful when we look for hidden parsimonious structures in data, each involving only a small subset of all features. In addition, the discovered correlations are highly interpretable as they are formed by pairs of sparse linear combinations of the original features. We show how CAA can be of use as a tool for  anomaly detection when the expected structure of correlations is not followed by anomalous data. We illustrate the utility of CAA in two application domains where single-class and unsupervised learning of correlation structures are particularly relevant: breast cancer diagnosis and radiation threat detection. When applied to the Wisconsin Breast Cancer data, single-class CAA is competitive with supervised methods used in literature. On the radiation threat detection task, unsupervised CAA performs significantly better than an unsupervised alternative prevalent in the domain, while providing valuable additional insights for threat analysis.
\end{quote}
\end{abstract}
\section{Introduction}
\label{introduction}

Canonical Correlation Analysis (CCA) is a useful tool for finding multivariate correlations between two sets of features. It works well when the features describing a problem or task at hand are naturally divided into separate sets. For example, it can be used to find correlations between genes and diseases, when both types of information are available for a set of patients. We believe that the capability of identifying relationships between sets of features is of general interest, even in cases when the natural or intuitive splits of features into separate subsets are not be obvious.

In this work, we present Canonical Autocorrelation Analysis (CAA). Here, the word \textit{autocorrelation} reflects correlations existing within a single set of features. CAA automatically discovers how to separate features into ``inputs" and ``outputs" by selecting these subsets that maximize their mutual correlation. Just like CCA, CAA can identify multiple pairs of subsets of features if the corresponding correlations exist. The capability of automatically identifying multivariate structures of correlation make CAA naturally fitting in applications suitable for single-class learning, unsupervised learning or data-driven discovery. 

CAA and the well known Sparse Principal Component Analysis (Sparse PCA) are fundamentally different. While CAA resembles Sparse PCA in the sense that they both find sparse representations of data contained in one matrix, PCA finds one-dimensional projections of data that maximize \textit{variance} of data in each projection, while CAA finds two-dimensional projections where \textit{correlation} is maximized. And even though a two dimensional projection can be obtained by combining a pair of principal components from Sparse PCA, such projections are specifically designed to be mutually uncorrelated. CAA specifically seeks projections composed by pairs of strongly correlated components, enabling discovery of hidden characteristic correlations in data, which cannot be found with other methods.

An example where CAA is useful is the task of radiation threat detection. In this problem we consider gamma-ray spectra collected with a portable sensor moving in an urban environment. Such data is typically represented as a vector of photon counts registered by the sensor at subsequent discrete and disjoint intervals of energy. These vectors, called in the application domain the energy spectra, become data points of our analysis. The fundamental task involving such data is to characterize variability of benign background radiation across multiple intervals of energy. Then, when a threatening source of radiation is present in the scene, the additional photon counts and their characteristic distribution may be identified in the noisy data as an anomaly. 

The simplest methods of radiation threat detection rely on counting total numbers of photons registered in a unit of time without regard to their specific energies. However, this approach is empirically inferior to the more detailed analysis which also considers correlations between counts observed in distinct energy bins, especially if the ambient radiation is highly variable. An example of such an environment is a cluttered urban scene, where total background counts often vary by a factor of 2 or 3. This magnitude of variation obfuscates visibility of threatening sources and makes threat detection challenging. Harmless objects like brick structures and cat litter, for example, emit high levels of radiation, but it is clearly not desirable that alarms go off every time the detector encounters a brick building. Therefore, it is necessary to do a more detailed analysis of the gamma-ray spectra, as often what identifies threats from harmless objects is not the total amount of energy, but the patterns photon counts follow in particular energy bins. Radiation threat detection is also challenging because different types of threats follow different spectral patterns, and even if templates of some common threat types are available, relying on supervised analysis of field data is risky. Supervised detectors may fail to detect threats that were not present in the training data, or which were shielded in a particularly unexpected fashion. Therefore, efforts have been made to develop unsupervised methods that successfully detect threats without relying on source templates. 

Applying the method presented in this paper, it enables us to characterize harmless radiation with a structure of correlations spanning sets of energy bins. Once this characterization is established, it can be used for spectral anomaly detection, as threats can be expected to not follow the same characterization as harmless ambience. We show in experiments that the ability of CAA to identify parsimonious subsets of features and later use it to model background radiation variability makes it more robust at threat detection than one of the most popular unsupervised methods used in the domain: a Principal Component Analysis (PCA)  based spectral anomaly detection method that considers all dimensions of spectra in linear combinations corresponding to subsequent principal components.

Breast cancer detection presents a similar challenge. Even though the rich body of previous research on the Breast Cancer Wisconsin data set~\cite{WoM1990} focuses on supervised learning \cite{Qui1996}\cite{NDK1999}\cite{AJS2003}\cite{PoG2007}\cite{SMD2013}\cite{ZYL2014}, there is reason to believe this problem should be approached as an anomaly detection problem. There exist many diverse variants of cancer, new types are still being discovered, and some of the known types are very rare, making it hard to train reliable detectors of these types of cancer.  CAA-based anomaly detection, however, shows to be a strong contender with many supervised (and therefore more informed about frequent types of cancer) methods, while potentially being more reliable than supervised methods at detecting types of cancer that were not well represented in the training set.

Note that sometimes we refer to the method as unsupervised and other times we refer to it as single-class. This is because in the radiation domain it takes as input unlabeled ambience data, while in the breast cancer domain it is trained on cases labeled as benign. Beyond this, however, the method is exactly the same in both cases.

In the remainder of this paper, Section~\ref{background} presents a brief review of related work, in Section \ref{sec:method} the proposed methods are described in detail,  and Section \ref{sec:results} contains results of experiments applying CAA to three different datasets: synthetic data, the Breast Cancer Wisconsin benchmark data set and the radiation threat detection data set. Section \ref{sec:conclusion} concludes our paper.

\section{Background}\label{background}

Canonical Correlation Analysis is a statistical method first introduced by ~\cite{Hot1936}, useful for exploring relationships between two sets of variables. It is used in machine learning, with applications to multiple domains; previous applications to medicine, biology and finance include  \cite{FBL2002}, \cite{DVV2006}, \cite{TeH2012} and \cite{WiT2009}.

A modified version of the algorithm was proposed by \cite{WiT2009}\cite{WTH2009}. Sparse CCA, an $L_1$ variant of the original CCA, adds constraints to guarantee sparse solutions. This limits the number of features being correlated. Their formulation of the maximization problem also differs from the traditional CCA algorithm. We will use this version since it is suitable for our needs.

Principal Component Analysis (PCA) and its variant Sparse PCA are also related to our research. PCA and Sparse PCA find orthogonal projections of the data onto linear spaces where the variance of the data is maximized. This procedure aims at preserving as much variance as possible, while reducing dimensionality. The goal of CAA is different: it finds two-dimensional projections that emphasize correlation, where such correlations constitute interpretable patterns that are present in the data. 

The work that most resembles our research is \cite{FBL2002} and \cite{DVV2006}. Using the notion of autocorrelation, they attempt to find underlying components of functional magnetic resonance imaging (fMRI) and electroencephalogram (EEG), respectively, that have maximum autocorrelation, and to do so they use CCA. The type of data they work with differs from ours in that their features are ordered, both temporally and spatially. To find autocorrelations, they take $X$ as the original data matrix and construct $Y$ as a translated version of $X$, such that $Y_t=X_{t+1}$. Since our data is not ordered we cannot follow the same procedure and must instead develop a new technique of finding autocorrelations.

In the radiation analysis domain, PCA-based anomaly detection is commonly used \cite{Tan2015}. PCA-based spectral anomaly detection works by essentially calculating the magnitude of the residual after a background-subtracting ``projection", where the projection is either a strict projection onto the subspace spanned by the top few principal components of the covariance matrix, or a dilation modified projection where the correlation (not covariance) matrix is used to learn the low dimensional projection and then appropriate scaling of the measurement dimensions is performed before projection and scaled back after projection. In any event, the ``projection" transformation computes the estimated background contribution to a radiation measurement, assuming that the top few principal components represent expected ``typical" background variation. After projection, the magnitude of the residual essentially provides the PCA-based spectral anomaly score. In the case of CAA, we build an anomaly detection method by letting multiple-to-multiple combinations of subsets of energy bins that are well correlated provide a characterization for normal data, which in the case of radiation corresponds to background. Such characterizations are used as the basis of the anomaly detection method described in \ref{subsec:anomaly}, which identifies anomalies when they depart from the expected patterns of multiple-to-multiple feature relationships.




\section{Canonical Autocorrelation Analysis}
\label{sec:method}


Given two matrices $X$ and $Y$, CCA aims to find linear combinations of their columns that maximize the correlation between them. Usually, $X$ and $Y$ are two matrix representations for one set of data points, so that each matrix is using a different set of variables to describe the same data points. 

We use the formulation of CCA given by \cite{WiT2009}. Assuming $X$ and $Y$ have been standardized and centered, the constrained optimization problem is:
\begin{equation}
\label{eq:CCA}
\begin{split}
max_{u,v} u^TX^TYv \\ 
||u||_2^2 \leq 1, ||v||_2^2 \leq 1 \hspace{.2in} ||u||_1 \leq c_1, ||v||_1 \leq c_2
\end{split}
\end{equation} 

When $c_1$ and $c_2$ are small, solutions will be sparse and thus only a few number of features are correlated.

Our goal is to find correlations within the same set of variables. Therefore, our matrices $X$ and $Y$ are identical. Applying Sparse CCA when $X=Y$ results in solutions $u=v$, corresponding to sparse PCA solutions of $X$~\cite{WTH2009}.

The naive approach for finding correlations between disjoint subsets of features would consist of trying multiple ways of splitting the features into two groups and applying Sparse CCA. This is computationally infeasible, as the number of times Sparse CCA would have to be applied is in $O(\binom {m} {s})$, where $m$ is the number of columns of $X$ and $s=|\{u_i \neq 0\}|$ is determined by the constraints $c_1$ and $c_2$ that would be applied to the original matrix $X$ to obtain the desired level of sparseness.

We develop a modified version of the algorithm capable of finding such autocorrelations by imposing an additional constraint on Equation \ref{eq:CCA} to prevent the model from correlating each variable with itself. Using the Lagrangian form, the problem can be written as follows:

\begin{equation}
\begin{split}
max_{u,v} u^TX^TXv-\lambda u^Tv \\ 
||u||_2^2 \leq 1, ||v||_2^2 \leq 1 \hspace{.2in} ||u||_1 \leq c_1, ||v||_1 \leq c_2
\end{split}
\label{CAA_original}
\end{equation} 

This will penalize vectors $u$ and $v$ for having high values for the same component, which is precisely what we are trying to avoid. Through proper factorization we are able to solve this through Sparse CCA.

\begin{theorem}
Solving Equation \ref{CAA_original} is equivalent to solving Sparse CCA for the pair of matrices 
\begin{center}
$\hat{X}=[V(S^2-\lambda I)]^T $ and $ \hat{Y}=V^T$.
\end{center}

\noindent where the Singular Value Decomposition of $X$ is $X=USV^T$ and $I$ is the identity matrix. 
\end{theorem}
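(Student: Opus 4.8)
The plan is to substitute the singular value decomposition $X = USV^T$ into the objective of Equation \ref{CAA_original} and show that it collapses exactly into the bilinear form $u^T\hat{X}^T\hat{Y}v$ that defines the Sparse CCA problem of Equation \ref{eq:CCA}. Because the two formulations carry identical constraints on $u$ and $v$, matching their objectives is enough to establish that they describe the same optimization problem, and hence are equivalent.

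Carrying this out, I would first expand the quadratic term. With $X = USV^T$ and $U^TU = I$, the Gram matrix simplifies to $X^TX = VS^2V^T$, so $u^TX^TXv = u^TVS^2V^Tv$. For the penalty term I would use that $V$ is orthogonal, so $VV^T = I$, and insert this factor to express the term in the same basis: $\lambda u^Tv = u^TV(\lambda I)V^Tv$. Subtracting gives
\begin{equation*}
u^TX^TXv - \lambda u^Tv = u^TV(S^2 - \lambda I)V^Tv.
\end{equation*}
It then remains to recognize the right-hand side as the CCA objective for the stated matrices. Since $\hat{X} = [V(S^2-\lambda I)]^T$ gives $\hat{X}^T = V(S^2 - \lambda I)$ and $\hat{Y} = V^T$, the product $\hat{X}^T\hat{Y} = V(S^2-\lambda I)V^T$ reproduces the factored expression exactly, so $u^TX^TXv - \lambda u^Tv = u^T\hat{X}^T\hat{Y}v$. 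The feasible sets agree because the constraints in both equations involve only the norms of $u$ and $v$, and so are unaffected by the substitution.

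The step I expect to require the most care is the rewriting $\lambda u^Tv = u^TV(\lambda I)V^Tv$, which hinges on $VV^T = I$. This holds only when $V$ is a genuinely square orthogonal factor, i.e.\ for the full SVD, and not for the economy decomposition where $VV^T$ is merely an orthogonal projector. I would therefore fix the convention so that $V$ is $m \times m$ orthogonal and interpret $S^2$ as the $m \times m$ diagonal matrix of squared singular values (zero-padded when $X$ has fewer rows than columns), ensuring that $S^2 - \lambda I$ and the entire factorization are well defined. The remaining manipulations are routine linear algebra.
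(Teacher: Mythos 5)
Your proof is correct and takes essentially the same route as the paper's: substitute the SVD, rewrite $\lambda I$ as $\lambda VV^T$, factor the objective into $u^TV(S^2-\lambda I)V^Tv$, and identify this with $u^T\hat{X}^T\hat{Y}v$ under identical constraints. Your added caveat that $VV^T=I$ requires the full (square orthogonal) $V$ rather than the economy SVD is a legitimate point of care that the paper leaves implicit.
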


\begin{proof}
First, notice that 

\begin{center}
$u^TX^TXv-\lambda u^Tv=u^T(X^TX-\lambda I)v$.
\end{center}

Therefore, the CAA problem can be written as:

\begin{equation}
\begin{split}
max_{u,v} u^T(X^TX-\lambda I)v \\ 
||u||_2^2 \leq 1, ||v||_2^2 \leq 1 \hspace{.2in} ||u||_1 \leq c_1, ||v||_1 \leq c_2
\end{split}
\label{CAAeq}
\end{equation} 

Finding the Singular Value Decomposition of $X$, we have:

\begin{center}
$\begin{array}{rcccl}
&X&=&USV^T   \\ 
\Rightarrow & X^TX & = & VS^2V^T\\
\Rightarrow & X^TX-\lambda I & = & VS^2V^T-\lambda V V^T\\
& &= & V(S^2-\lambda I)V^T
\end{array}$
\end{center}

Now, setting $\hat{X}=[V(S^2-\lambda I)]^T $ and $ \hat{Y}=V^T$, the problem becomes:

\begin{equation}
\begin{split}
max_{u,v} u^T\hat{X}^T\hat{Y}v \\ 
||u||_2^2 \leq 1, ||v||_2^2 \leq 1 \hspace{.2in} ||u||_1 \leq c_1, ||v||_1 \leq c_2
\end{split}
\end{equation} 

This problem can be solved using Sparse CCA, and since the solutions obtained with this method are independent of the factorization of $\hat{X}^T\hat{Y}$, solving this is equivalent to solving the CAA maximization problem in Equation \ref{CAAeq}.
\end{proof}

The maximization problem is not convex with regard to $\lambda$ and is sensitive to variations of this parameter. Our approach to solving CAA performs a grid search and returns up to $m$ pairs of canonical vectors $u,v$ (it may return fewer if there are not enough vectors for which the data has a linear correlation when projected onto the space defined by $u$ and $v$). 

The grid search uses an evaluation metric for relative sparseness developed for this purpose. Vectors $u$ and $v$ are considered to have relative sparseness if components with high values in $u$ do not coincide with high valued components in $v$. This can be measured by

\begin{equation}
t(u,v)=1-\sum_i|u_iv_i|
\end{equation}

The case of $t(u,v)=1$ corresponds to vectors $u$ and $v$ having disjoint support. The $i$th CAA solution is obtained by finding the minimum value of $\lambda$ for which the corresponding CCA solution of $\hat{X}$ and $\hat{Y}$ have a relative sparseness of 1. Now, instead of applying Sparse CCA O($\binom {m} {s}$) times, as in the naive approach, we apply it O($\lambda$) times, where O($\lambda$) refers to the number of points in the grid search.

\subsection{CAA-based anomaly detection}
\label{subsec:anomaly}

How can the outcome of CAA be used once it has been applied to a matrix $X$? CAA produces several multiple-to-multiple linear correlation patterns. If the only goal is data characterization, one can analyze the coefficients in the canonical projections to understand which items are relevant for a certain data set. In addition, such projections can be used as the basis of an anomaly detection method, which we introduce in this section.

CAA can be applied to a data set of data points that are assumed to not be anomalous. Each CAA solution, a pair of canonical vectors $(u,v)$, maps every data point into a new bi-dimensional space, where the x-axis corresponds to $u^tX^t$ and the y-axis corresponds to $Xv$. We expect top canonical projections to yield scatter plots in which the training data shows a strong diagonal tendency. To characterize a shape of such distributions we can use some density model, e.g. bivariate Gaussian. After doing this for each canonical solution, the data set is characterized by:

\begin{itemize}

\item $\{(u_i,v_i): i=1,...,k$ for some $k \leq n\}  $

\item $\{(\mu _i,\sigma _i): i=1,...,k$ for some $k \leq n\}  $
\end{itemize}

For each $i$ we obtain a characterization of the training data that involves multiple features. Given a new data point, it can be mapped onto all $k$ canonical spaces, and the Mahalanobis distance to the Gaussian on each of these spaces can be computed. If the new point can be characterized in the same way as the training data, all of the Mahalanobis distances should be small. It can be expected that an anomalous point cannot be characterized the same way as a normal one; it will likely fail to follow one or multiple of these characterizations, which will result in one or multiple large Mahalanobis distances. It may be convenient to marginalize the resulting distribution of scores. One conceivable option for this is maximization, as defined in Equation \ref{maxMahScore}. 

\begin{equation}
\begin{split}
s(x) = \max_{i=1,...,k}D_{M_i}((u_i^tx^t,xv_i)) 
\end{split}
\label{maxMahScore}
\end{equation} 

where $D_{M_i}(\cdot)$ is the Mahalanobis distance to $N(\mu_i, \sigma_i)$ and $x$ is the current observation. 

The anomaly detection method can be naturally generalized to fit any distribution to the projection of the data, replacing the Mahalanobis distance with the likelihood or a similar metric. Additionally, $s(x)$ is taken to be the maximum distance because in the present applications a point is considered anomalous when it fails to follow any of the characterizations. In the case of radiation, for example, it is known by the experts that threats may only manifest in a small subset of the energy bins, while for the rest its behavior resembles that of harmless objects. However, there might exist applications where data is only considered anomalous when it fails to follow all or almost all of the characterizations, in which case $s(x)$ can be replaced for a cumulative or a robust metric over the $k$ scores.

\section{Experiments}
\label{sec:results}

\subsection{Synthetic data}

The first set of experiments aims to assess and illustrate how known imposed correlations can be successfully retrieved. 

A Gaussian bivariate distribution is generated given a mean and a covariance, and 200 points are sampled from it. A matrix $X$ of dimensions $200 \times 20$ is created such that there exist sparse vectors $u,v$, each with two non-zero components, for which $(u^tX,Xv)$ correspond to the previously generated Gaussian. $70\%$ of data is used to train a CAA model and the rest is used for testing.  Figure \ref{Gaussian_synthetic} (left) contains a plot of the points sampled from the Gaussian, where the axes indicate the linear combinations of columns of X that map the original data onto the Gaussian, and Figure \ref{Gaussian_synthetic} (right) shows the projection of both training and testing data onto the space determined by the first pair of canonical vectors retrieved by CAA, where the equations on the axes correspond to the correlation they establish. Note that the method is able to successfully identify the four columns for which multiple-to-multiple linear correlation exists.

\begin{figure}
\centering
  \begin{subfigure}[b]{0.22\textwidth}
    \includegraphics[width=\textwidth]{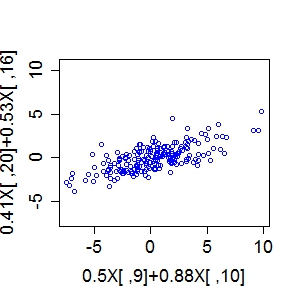}
  \end{subfigure}
  \begin{subfigure}[b]{0.22\textwidth}
    \includegraphics[width=\textwidth]{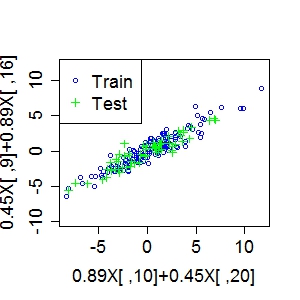}
  \end{subfigure}
  \caption{Comparison between synthetic correlation pattern (left) and correlation pattern retrieved by CAA (right). Equations have the form $k_iX[,i]+k_jX[,j]$, where $X[,i]$, $X[,j]$ are the $i$th and $j$th columns of X and $k_i$, $k_j$ are the linear combination coefficients.}
  \label{Gaussian_synthetic}
\end{figure}

\subsection{Breast cancer detection}

The Wisconsin Breast Cancer data set contains 699 cases of tumors, 683 after removing those with missing values, labeled as either malignant (239 cases) or benign (444 cases). Each sample corresponds to a tumor described by 9 numeric variables that take on values between 1 and 10: clump thickness, uniformity of cell size, uniformity of cell shape, marginal adhesion, single epithelial cell size, bare nuclei, bland chromatin, normal nucleoli and mitoses. Previous work has been done with this data to build predictive models that distinguish malignant samples from benign samples \cite{Qui1996}\cite{NDK1999}\cite{AJS2003}\cite{PoG2007}\cite{SMD2013}\cite{ZYL2014}. These approaches use supervised classifiers. The use of single-class approaches to detect malicious tumors may however be beneficial, given that some types of cancer are fairly rare and different from more typical cases. There has been research showing that cancer can follow multiple developmental pathways in asynchronous orders~\cite{AKC2014}, and even these studies are not known to be complete, so there is much motivation to anticipate at least some cancerous tumors that do not follow characteristics of the most common types of cancer, but are still identifiable as deviating from typical patterns in benign data in some way.

We approach this as an anomaly detection problem, and train CAA using benign cases. We then use the CAA-based anomaly detection method to score both anomalous and normal test cases. As a comparison, we also apply the PCA-based anomaly detection approach previously described in Section \ref{background}. 

The Area Under the ROC Curve (AUC) for the CAA-based approach is $0.979$ within the confidence interval $[0.934,1]$, while PCA-based approach obtains an AUC of $0.720$ within the confidence interval $[0.612,0.824]$. 

Most results in the literature are only reported in terms of accuracy, without specifying the rates of false positives and false negatives that such levels of accuracy entail. To make our results comparable to previous literature, we do a basic grid search for a score threshold on the training data using accuracy as criterion. Table \ref{table_breastcancer} compares results in literature, CAA-based results and PCA-based results using 10-fold cross validation, and its standard deviation. Performance of CAA is comparable to that of supervised methods and significantly better than that the single-class alternative (PCA). Additionally, CAA can be expected to be more reliable than a supervised method when tasked to detect types of cancer that were not present in the training set. The fact that it is fully interpretable is also an advantage, as domain experts, in this case doctors, may be reluctant to accept decision support methods that act as black-boxes, such as support vector machines or neural networks, plus classification criteria that are easily interpretable by a domain expert may lead to new qualitative insights of how different types of cancer behave.

\begin{table*}[ht]
\centering 
\begin{tabular}{c c c l} 
\hline
Type & Method & Reference & Accuracy (\%)  \\
\hline
Supervised & K-SVM & \cite{ZYL2014} & 97.38 \\
Supervised & LLWNN & \cite{SMD2013} & 97.2  \\
Supervised & LS-SVM & \cite{PoG2007} & 98.53   \\
Supervised & Supervised fuzzy clustering & \cite{AJS2003} & 95.57 \\
Supervised & NEFCLASS & \cite{NDK1999} & 95.06 \\
Supervised &  C4.5  & \cite{Qui1996} & 94.74   \\
Single-class & \bf{PCA}& & 70.11 $\pm$ 5.76    \\
Single-class & \bf{CAA}& & 94.72 $\pm$ 3.27    \\

\hline
\end{tabular}
\caption{Comparison of classification accuracy on the Wisconsin Breast Cancer Data Set for 10-fold cross-validation. Standard deviation is not available in the literature, hence it is only included for CAA.}
\label{table_breastcancer}
\end{table*}




\subsection{Radiation threat detection}

The radiation data used in our experiments is featurized into 128 different energy bins, and reflects photon counts obtained from gamma-ray spectrometer measurements. There are 20,000 records available for harmless background data, and 10,000 records for each of 15 types of injected data, which simulates different threats. 

As it was previously explained, PCA-based spectral anomaly detection assumes that the top few principal components represent expected background variation, and uses the residual after removing these top components to provide a spectral anomaly score. In the case of CAA, multiple-to-multiple combinations of bins that are well correlated provide a characterization model for background radiation. This model can be used as the basis of the anomaly detection method described in Section \ref{subsec:anomaly}, which identifies threats when radiation spectra depart from the expected patterns.

\begin{figure}[t]
   \includegraphics[width=0.95\linewidth]{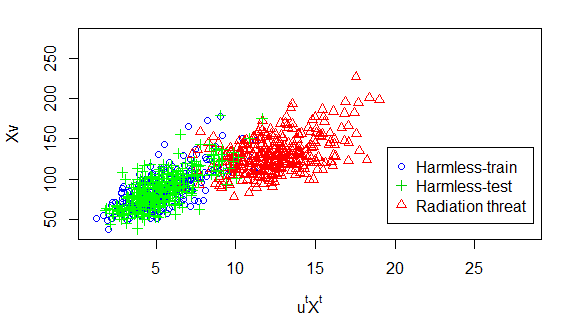}
\caption{Projection of background radiation and threat radiation onto space determined by CAA canonical projections.}
\label{proj_radiation}
\label{fig:radiation_proj}
\end{figure}

\begin{figure}[t]
\centering
\includegraphics[width=0.9\linewidth]{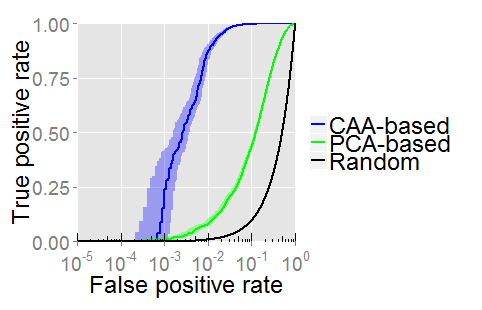}
  \caption{ROC curve comparing CAA-based and PCA-based anomaly detection methods for analyzed threat. X-axis is in log-scale.}
  \label{ROC:radiation}
\end{figure}%

Figure \ref{proj_radiation} shows an example of the mapping onto the space determined by CAA canonical vectors ($u,v$) for one particular threat. This particular pair $u,v$ corresponds to the one that is used most often to generate the scores for the data points belonging to that threat, meaning the one where the maximum Mahalanobis distance to the Gaussian characterizing background radiation is found most often, as defined in Equation \ref{maxMah}.

\begin{center}
\begin{equation}
(u,v) = \argmax_{u_i,v_i}D_{M_i}((u_i^tx^t,xv_i))|_{i=1}^k 
\end{equation}
\label{maxMah}
\end{center}

As Figure \ref{proj_radiation} shows, in this example the threat-injected data distribution visibly diverges from the test set distribution of benign data. For this threat type, CAA model achieves an AUC of 0.995, while PCA-based detector has an AUC of 0.821. The ROC plots are shown in Figure~\ref{ROC:radiation}, with the false positive rate axis in logarithmic scale, to enhance view at low false positive rates. 

Both models were trained using 10,000 background records, and applied to detect different threats in data sets where the 10,000 samples of injected data corresponding to that threat are combined with the remaining 10,000 background records. PCA-based technique outperforms CAA-based anomaly detection in only one case among 15, for which the AUC's are 0.963 and 0.831, correspondingly. Figure~\ref{AUC} summarizes the results for all 15 threats. A Student t-test yields a p-value of 0.00587, indicating that CAA performs significantly better than PCA.

\begin{figure}
 \centering
 \includegraphics[width=1\linewidth]{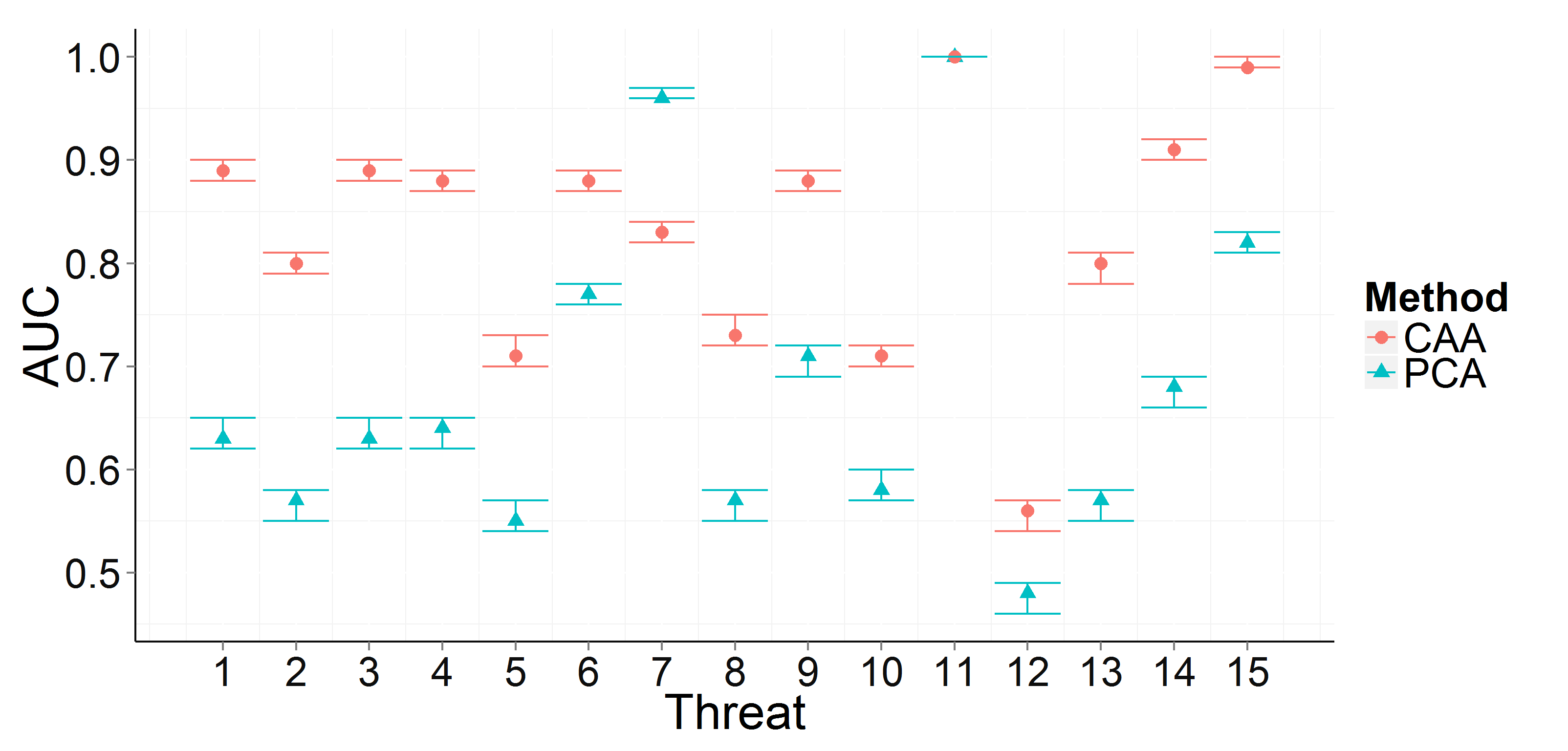}
 \caption{AUC and confidence intervals for PCA and CAA anomaly detection applied to radiation threats. A Student’s t-test applied to this results give a p-value of 0.00587, showing CAA performs significantly better than PCA.}
 \label{AUC}
 \end{figure}

In addition, the proposed method is readily interpretable. When a spectral measurement is identified as a possible threat, the bins on which it fails to follow background patterns are pointed out. This has two main advantages: first, when analyzing an individual data point the user knows what energy bins the algorithm used to make its decision, for easy adjudication of the results. Secondly, when applied to a batch of data associated to a particular threat type, it is possible to identify the bins on which the threat's behavior differs from the background behavior, providing the way to characterize the threat. 

Figure \ref{sunset} shows the frequency with which energy bins are used to identify anomalies. On the top, it shows the aggregated counts for a threat batch associated to one particular threat type; on the bottom, it shows an example of the radiation spectrum for a spectrum the method correctly labels as threat, the average of the background radiation spectra used for training, and colours those that were used to label the data as anomalous. It is interesting to see that even though the method is fully unsupervised, such bins correspond to spikes in the injected threat template.

\begin{figure}[ht]
\centering
\includegraphics[width=0.93\linewidth]{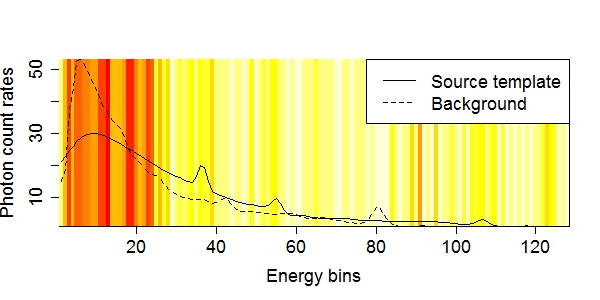}
 \includegraphics[width=0.93\linewidth]{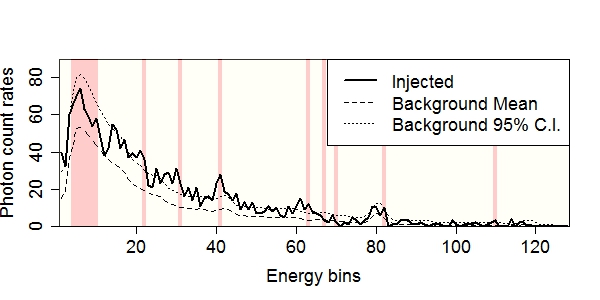}
\caption{Heat graphs indicating which energy bins are used to label elements as threats. Top: Cumulative use for spectra in threat batch. Bottom: Individual spectrum.}
\label{sunset}
\end{figure}



\section{Conclusion}
\label{sec:conclusion}

We have presented Canonical Autocorrelation Analysis (CAA), a new method that automatically finds subsets of features of data that form strong multiple-to-multiple linear correlations. We have also shown how CAA can be useful at anomaly detection tasks which involve multivariate numeric data where detecting changes in the structure of their mutual correlations may be of interest. We have applied CAA-based anomaly detector to breast cancer detection and radiation threat detection, and it was successful in both these applications. We believe CAA can be a valuable tool in many scenarios where in spite of an existing division between ``non-anomalous" and ``anomalous", the ``anomalous" class is composed of multiple sub-classes, many of which might be unknown or underrepresented in the training data, but are still important to detect. The examples we have shown confirm that: it is crucial to be equally capable of detecting known radiation threats and rare ones, and it is equally important to detect a common type of cancer and an unusual one. 

The models built by CAA are readily interpretable. The canonical projections are sparse, and even though the anomaly detection method analyzes every projection found, often it is sufficient to consider only one of the pairs to adjudicate a query data as possibly anomalous, and each test case may invoke a different canonical pair as the most useful for handling it. Interestingly, these most useful projections are usually also the ones which provide the most interpretable intuition on why and how the particular query does not seem to fit the reference distribution. Consider that each canonical projection spans the axes of a 2-dimensional Cartesian system which happen to be sparse and easy to interpret linear combinations of the native features of data, and it should be clear why CAA may be particularly useful in applications where machine learning is used to aid humans in their decision making.

The next step in our research is to extend CAA towards supervised learning and to enable discovery of non-linear relationships via kernelization. 


\bibliography{bib}
\bibliographystyle{aaai}

\end{document}